\definecolor{Red}{rgb}{1,0,0}
\definecolor{Blue}{rgb}{0,0,1}
\definecolor{Olive}{rgb}{0.41,0.55,0.13}
\definecolor{Green}{rgb}{0,1,0}
\definecolor{MGreen}{rgb}{0,0.8,0}
\definecolor{DGreen}{rgb}{0,0.55,0}
\definecolor{Yellow}{rgb}{1,1,0}
\definecolor{Cyan}{rgb}{0,1,1}
\definecolor{Magenta}{rgb}{1,0,1}
\definecolor{Orange}{rgb}{1,.5,0}
\definecolor{Violet}{rgb}{.5,0,.5}
\definecolor{Purple}{rgb}{.75,0,.25}
\definecolor{Brown}{rgb}{.75,.5,.25}
\definecolor{Grey}{rgb}{.5,.5,.5}
\definecolor{Black}{rgb}{0,0,0}
\newtheorem{theorem}{Theorem}
\newtheorem{definition}{Definition}
\newtheorem{claim}{Claim}
\newcommand{\ds}{\delta}
\title{Expanding the class of global objective functions\\ for dissimilarity-based
	hierarchical clustering}
\author{
  Sebastien Roch\\
  Department of Mathematics\\
  University of Wisconsin--Madison\\
  \texttt{roch@math.wisc.edu}
}
\begin{document}

\maketitle

\begin{abstract}
  Recent work on dissimilarity-based hierarchical clustering has led to the introduction of global objective functions for this classical problem. 
  Several standard approaches, such as average linkage,
  as well as some new heuristics have been shown to provide approximation guarantees. 
  Here we introduce a broad new class of objective functions
  which satisfy desirable properties studied in prior work. 
  Many common agglomerative and divisive
  clustering methods are shown to be greedy algorithms for these objectives, which are inspired
  by related concepts in phylogenetics. 
\end{abstract}

\section{Introduction}
\label{sec:intro}

\paragraph{Background}
In hierarchical clustering, one seeks a recursive partitioning
of the data that captures clustering information at different
levels of granularity. Classical work on the subject
mostly takes an algorithmic perspective. In particular,
various iterative clustering methods have been developed, including the well-known bottom-up dissimilarity-based approaches
single linkage, average linkage, etc.~(see, e.g.,~\cite[Chapter 25]{murphy_machine_2012}).
Recent work on dissimilarity-based hierarchical clustering 
has emphasized a different, optimization-based, perspective. This 
has led to the introduction of global objective functions for this classical problem~\cite{Dasgupta:16}. 
Some standard approaches
as well as new heuristics have been shown to provide approximation guarantees~\cite{Dasgupta:16,NIPS2016_6325,CharikarChatziafratis:17,CKMM:18,chatziafratis_hierarchical_2018,charikar_hierarchical_2019,alon_hierarchical_2020}. 
These new objective functions have also been justified through their behavior on random or structured input models~\cite{Dasgupta:16,NIPS2017_7200,CKMM:18,manghiuc_hierarchical_2021}. 

Here we introduce a broad new class of objective functions
which satisfy natural, desirable properties considered in previous work. These new objectives are inspired by related work in the phylogenetic reconstruction literature (see Section~\ref{sec:heuristic}). We argue that several common agglomerative and divisive
clustering methods, including average linkage, single linkage and recursive sparsest cut, can be interpreted as greedy algorithms for these objectives. 

\paragraph{Definitions and main results}
Our input data is a collection of $n$ objects to be clustered,
which we denote without loss of generality $L := \{1,\ldots,n\}$,
together with a dissimilarity map.\footnote{Our results can also be adapted to the case where the input are \emph{similarities}. Throughout, we 
confine ourselves to dissimilarities for simplicity.}
\begin{definition}[Dissimilarity]
A \emph{dissimilarity} on $L$ is a map
$\ds \,:\, L\times L \to [0,+\infty)$ which satisfies: $\ds(x,x) = 0$ for all $x$ and
$\ds(x,y) = \ds(y,x) > 0$ for all $x\neq y$.
\end{definition}
\noindent For disjoint subsets $A, B \subseteq L$, we overload $\ds(A,B) := \sum_{x\in A, y\in B} \ds(x,y)$
and define
$$
\bar{\ds}(A,B) = \frac{\ds(A,B)}{|A| |B|}.
$$ 

As in~\cite{Dasgupta:16,CKMM:18}, we encode a hierarchical clustering as a rooted binary tree whose leaves are the objects to be clustered.
\begin{definition}[Hierarchy]
A \emph{hierarchy} on $L$ is a rooted binary tree $T = (V,E)$ with $n$ leaves, which we identify with the set $L$. 
\end{definition}
\noindent We will need some notation.
The leaves \emph{below} $v \in V$, i.e., of the subtree $T[v]$ rooted at $v$, are denoted by $L_T[v] \subseteq L$.
We let $S_T$ be the internal vertices of $T$, that is, its non-leaf vertices. 
For $s \in S_T$, we denote by $s_-$ and $s_+$ the immediate descendants of $s$ in $T$. 
For a pair of leaves $x\neq y \in L$, the most recent common ancestor of $x$ and $y$ in $T$, denoted $x\land_T y$,
is the internal vertex $s$ furthest from the root (in graph distance) such that $x, y \in L_T[s]$. 

In our setting, the goal of hierarchical clustering is to map dissimilarities to hierarchies. As a principled way to accomplish
this, a global objective function over hierarchies was proposed
in~\cite{Dasgupta:16}. 
It was generalized in~\cite{CKMM:18}
to objectives 
of the form:
\begin{equation}\label{eq:dasgupta-cost}
\Gamma(T;\ds)
= \sum_{s \in S_T} \gamma(|L_T[s_-]|,|L_T[s_+]|) \,\ds(L_T[s_-],L_T[s_+]),
\end{equation}
where $\gamma$ is a given real-valued function and $|A|$ is the number of elements in $A$. 
One then seeks a hierarchy $T$ which minimizes $\Gamma( \,\cdot\,;\ds)$ under input $\ds$.\footnote{Note that we deviate from~\cite{Dasgupta:16,CKMM:18} (in the \emph{dissimilarity} setting) and \emph{minimize} the objective function.}
For instance, the choice
$\gamma(a,b) = \gamma_\mathrm{D}(a,b) := -a-b$ was made in~\cite{Dasgupta:16}.

It was shown in~\cite{Dasgupta:16,CKMM:18} that, for $\gamma_\mathrm{D}$, the objective $\Gamma$ satisfies several natural conditions. In particular, it satisfies the following.
\begin{definition}[Unit neutrality]
An objective $\Gamma$ is \emph{unit neutral} if:
all hierarchies 
have the same cost under the \emph{unit dissimilarity} $\ds(x,y) = 1$ for all $x\neq y$.
\end{definition}
\noindent Moreover, this $\Gamma$ behaves well 
on ultrametric inputs. Formally,
a dissimilarity $\ds$ on $L$ is an \emph{ultrametric} if
for all $x,y,z \in L$, it holds that
\begin{equation}\label{eq:ultrametric}
\ds(x,y) \leq \max\{\ds(x,z),\ds(y,z)\}.
\end{equation}
Ultrametrics are naturally associated to hierarchies in the following
sense. If $\ds$ is an ultrametric, then there is a (not necessarily unique) hierarchy $T$ together with a height function
$h : S_T \to (0,+\infty)$ such that, for all $x\neq y \in L$, it holds that 
\begin{equation}\label{eq:dist-height}
\ds(x,y) = h(x\land_T y). 
\end{equation}
See e.g.~\cite{SempleSteel:03} for details. We say that such a hierarchy $T$ is
\emph{associated} to ultrametric $\ds$.
\begin{definition}[Consistency on ultrametrics\footnote{Our definition is related to what is referred to as \emph{admissibility} in~\cite{CKMM:18}. We will not introduce the more general setting of~\cite{CKMM:18} here. 
}
]
The objective function $\Gamma$ is \emph{consistent on ultrametrics} if the following holds for all ultrametric $\ds$ and associated hierarchy $T$. For any hierarchy $T'$, we have the inequality
\begin{displaymath}
\Gamma(T;\ds)
\leq \Gamma(T';\ds).
\end{displaymath}
In other words, a hierarchy associated to an ultrametric $\ds$ is a global minimum under input $\ds$.
\end{definition}
\noindent Observe that unit neutrality in fact follows 
from consistency on ultrametrics
as the unit dissimilarity on $L$ is an ultrametric 
that can be realized
on \emph{any} hierarchy by assigning height $1$ to all internal vertices.

Here we introduce a broad new class of global objective functions for dissimilarity-based hierarchical clustering. 
For a subset of pairs $M \subseteq L\times L$, let $\ds|_M:M \to [0,+\infty)$ denote the
dissimilarity $\ds$ restricted to $M$, i.e., $\ds|_M(x,y) = \ds(x,y)$ for all $(x, y) \in M$. Let also $\min \ds|_M$ and
$\max \ds|_M$ be respectively the minimum and maximum value of $\ds$ over pairs in $M$. We consider objective functions of the general form 
\begin{equation}\label{eq:new-cost}
\Gamma(T;\ds)
= \sum_{s \in S_T} 
\hat{h}(T[s],\ds|_{L_T[s_-]\times L_T[s_+]}),
\end{equation}
where we require moreover that 
the function $\hat{h}$ 
satisfy the condition
\begin{equation}\label{eq:hhat-condition}
\hat{h}(T[s],\ds|_{L_T[s_-]\times L_T[s_+]})
\in
\left[
\min \ds|_{L_T[s_-]\times L_T[s_+]},
\max \ds|_{L_T[s_-]\times L_T[s_+]}
\right],
\end{equation}
for any hierarchy $T$, any $s \in S_T$ and any dissimilarity $\ds$. 
We refer to such an objective function as a \emph{length-based
	objective}, a name which will be explained in Section~\ref{sec:new-cost}. 

One possibility for $\hat{h}$ is
\begin{equation}\label{eq:dbar}
\hat{h}(T[s],\ds|_{L_T[s_-]\times L_T[s_+]})
= \bar{\ds}(L_T[s_-],L_T[s_+])
= 
\frac{\ds(L_T[s_-],L_T[s_+])}{|L_T(s_-)|\,|L_T(s_+)|},
\end{equation}
which is indeed a function of only $T[s]$ and $\ds|_{L_T[s_-]\times L_T[s_+]}$. This choice is a special case of~\eqref{eq:dasgupta-cost} with 
\begin{displaymath}
\gamma(|L_T[s_-]|,|L_T[s_+]|) = \frac{1}{|L_T[s_-]|\,|L_T[s_+]|}.
\end{displaymath}
We show in Section~\ref{sec:heuristic} that there
are many other natural possibilities that do not
fit in the framework~\eqref{eq:dasgupta-cost},
including more ``non-linear'' choices.

Our main result is that, under condition~\eqref{eq:hhat-condition}, our new objectives are unit neutral and consistent on ultrametrics and therefore provide sound global objectives for hierarchical clustering.
\begin{theorem}[Length-based objectives]\label{thm:main}
Any length-based objective satisfying~\eqref{eq:hhat-condition}
is unit neutral and consistent on ultrametrics.
\end{theorem}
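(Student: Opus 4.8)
The plan is to prove consistency on ultrametrics directly; unit neutrality then comes for free from the observation in the text, since the unit dissimilarity is an ultrametric associated to \emph{every} hierarchy, so consistency makes every hierarchy a global minimum and hence forces all hierarchies to have equal cost. So fix an ultrametric $\ds$, a hierarchy $T$ associated to it with height function $h$, and an arbitrary competitor $T'$; the goal is $\Gamma(T;\ds)\le\Gamma(T';\ds)$. The first step is to evaluate the left-hand side exactly. For each $s\in S_T$, every pair $(x,y)\in L_T[s_-]\times L_T[s_+]$ has $x\land_T y = s$, so by \eqref{eq:dist-height} the restricted dissimilarity $\ds|_{L_T[s_-]\times L_T[s_+]}$ is \emph{constant}, equal to $h(s)$. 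Then $\min$ and $\max$ in \eqref{eq:hhat-condition} coincide, pinning down $\hat h(T[s],\cdot)=h(s)$, and summing gives $\Gamma(T;\ds)=\sum_{s\in S_T}h(s)$ regardless of the particular $\hat h$.

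Next I would pass to a threshold (layer-cake) representation that is intrinsic to $\ds$. For $t\ge 0$ let $N(t)$ be the number of equivalence classes of $L$ under the relation $\ds(x,y)\le t$; this is a genuine equivalence relation precisely because $\ds$ is an ultrametric \eqref{eq:ultrametric}. Running the merge process up the tree $T$ shows $N(t)=n-\#\{s\in S_T: h(s)\le t\}$, so $N(t)-1=\#\{s: h(s)>t\}$ and therefore $\Gamma(T;\ds)=\sum_{s\in S_T}h(s)=\int_0^\infty (N(t)-1)\,dt$. On the other side, condition \eqref{eq:hhat-condition} gives the lower bound $\Gamma(T';\ds)\ge\sum_{s\in S_{T'}} m(s)$, where $m(s):=\min\ds|_{L_{T'}[s_-]\times L_{T'}[s_+]}$, and the same layer-cake identity yields $\sum_{s}m(s)=\int_0^\infty \#\{s\in S_{T'}: m(s)>t\}\,dt$. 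Comparing integrands, the whole theorem reduces to the single pointwise inequality $\#\{s\in S_{T'}: m(s)>t\}\ge N(t)-1$ for every $t$.

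The hard part is this last combinatorial inequality, which I would prove by counting the complementary set. Note that $m(s)>t$ exactly when the split at $s$ is \emph{pure}, i.e.\ no equivalence class of the level-$t$ relation meets both $L_{T'}[s_-]$ and $L_{T'}[s_+]$; conversely $s$ is \emph{impure} when some class $C$ has leaves on both sides, which happens iff $s=x\land_{T'}y$ for some $x,y\in C$. The key fact is that in a binary tree the set of pairwise most recent common ancestors of a $c$-element leaf set has exactly $c-1$ elements (it is the branch set of a binary topology on $c$ leaves). Hence the impure vertices contributed by a class $C$ number exactly $|C|-1$, and a union bound over the $N(t)$ classes bounds the total number of impure internal vertices by $\sum_C(|C|-1)=n-N(t)$. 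Since $T'$ has $n-1$ internal vertices, at least $(n-1)-(n-N(t))=N(t)-1$ are pure, which is the desired inequality.

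Putting the pieces together, $\Gamma(T';\ds)\ge\int_0^\infty \#\{s\in S_{T'}:m(s)>t\}\,dt\ge\int_0^\infty(N(t)-1)\,dt=\Gamma(T;\ds)$, establishing consistency and with it unit neutrality. I expect the only real subtlety to be the impure-vertex count, together with the implicit use of the ultrametric property to guarantee transitivity so that the level-$t$ neighborhoods genuinely partition $L$; the rest is bookkeeping, and in particular the argument never uses anything about $\hat h$ beyond the sandwich bound \eqref{eq:hhat-condition}.
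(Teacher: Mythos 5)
Your proof is correct, but it takes a genuinely different route from the paper's. The opening move is shared: on the hierarchy $T$ associated to the ultrametric, every pair across the split at $s$ has dissimilarity $h(s)$, so the sandwich condition~\eqref{eq:hhat-condition} pins $\hat h$ down and gives $\Gamma(T;\ds)=\sum_{s\in S_T}h(s)$, while on any competitor $\Gamma(T';\ds)\ge\sum_{s\in S_{T'}}m(s)$ with $m(s)=\min\ds|_{L_{T'}[s_-]\times L_{T'}[s_+]}$; the paper packages exactly this as its ``Reduction to minimum'' claim. From there the paths diverge. The paper argues by induction on the number of leaves: it splits $T$ at its root, restricts $T'$ to $L_-$ and $L_+$ (introducing extended hierarchies with ``muted'' degree-$2$ vertices), and closes the induction via a four-way case analysis of the vertices of $T'$ together with the counting identity $r_{2,tm}=1+r_{2,nm}$. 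You instead integrate over thresholds: writing both sums as layer-cake integrals, the theorem reduces to the pointwise inequality $\#\{s\in S_{T'}:m(s)>t\}\ge N(t)-1$, which you prove by bounding the impure vertices of $T'$ using the fact that the pairwise most recent common ancestors of a $c$-element leaf set in a binary tree form a set of exactly $c-1$ vertices, so each level-$t$ class $C$ can spoil at most $|C|-1$ splits. This avoids the induction, the restriction machinery, and the muted-vertex bookkeeping entirely, at the price of invoking the (standard, and itself easily proved) MRCA-counting fact; it also yields a slightly stronger, level-by-level comparison of the two objectives. One small point you should make explicit: the identity $N(t)=n-\#\{s\in S_T:h(s)\le t\}$ behind your ``merge process'' uses that $h$ is non-decreasing along leaf-to-root paths of $T$. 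This is automatic---every internal vertex of a binary hierarchy is a pairwise MRCA, and the ultrametric inequality~\eqref{eq:ultrametric} applied to suitable triples forces the monotonicity---but since the paper's definition of an associated hierarchy does not state it, it deserves a line in your write-up.
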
 

\paragraph{Organization} The rest of the paper is organized as follows. Theorem~\ref{thm:main} is proved in Section~\ref{sec:proof}. Motivation and further related work
is provided in Section~\ref{sec:heuristic}.

%
%
%
%
%

\section{Motivation}
\label{sec:heuristic}
\label{sec:new-cost}


To motivate our class of objectives for hierarchical clustering, we first give a heuristic derivation of the choice~\eqref{eq:dbar}, which is inspired by the concept of minimum evolution (see e.g.~\cite{GascuelSteel:06} and references therein). In phylogenetics,
one is given molecular sequences from extant species and the goal is to reconstruct a phylogenetic tree
representing the evolution of these species (together with edge lengths which roughly measure the amount of evolution on the edges). One popular approach is to estimate a distance between each pair of species by comparing their molecular sequences. Various distance-based methods exist. One such class of methods relies on the concept
of minimum evolution, which in a nutshell stipulates that the best tree is the shortest one (i.e., the one with the minimum sum of edge lengths). Put differently, in the spirit of Occam's razor, the solution involving the least amount of evolution to explain the data should be preferred. Without going into details (see, e.g.,~\cite{SempleSteel:03,Steel:16,Warnow:17} for comprehensive introductions to phylogenetic reconstruction methods), methods based on minimum evolution have been highly successful in practice. In particular, one of the most popular methods in this area is Neighbor-Joining~\cite{NJ}, which has been shown to be a greedy method~\cite{GascuelSteel:06} for a variant of minimum evolution called balanced minimum evolution. Below we establish a formal connection to our class of objectives.

\paragraph{Total length}
Inspired by the concept of minimum evolution, we reformulate the length-based objective with choice~\eqref{eq:dbar}, i.e.,
\begin{equation}\label{eq:ls-cost}
\Gamma(T;\ds)
= \sum_{s \in S_T} 
\bar{\ds}(L_T[s_-],L_T[s_+]),
\end{equation}
as an estimate of the ``total length of the hierarchy $T$ under $\ds$.'' 
We start with the ultrametric case.
If $\ds$ is ultrametric
and $T$ is associated to $\ds$ with height function $h$
then, for any $s \in S_T$, $x\in L_T[s_-]$ and $y \in L_T[s_+]$,
we have
\begin{equation}\label{eq:derivation-dsbar}
\bar{\ds}(L_T[s_-],L_T[s_+])
= \ds(x,y) = h(s).
\end{equation}
Moreover, letting $M = \max \ds + 1$, 
consider a modified rooted tree $\tilde{T} = (\tilde{V},\tilde{E})$ with an extra edge
connected to the root of $T$ and associate height $M$ to the new root so created. Then assign to each edge $e = (s_0,s_1)$ of $\tilde{T}$ a length equal to $h(s_0) - h(s_1)$, where $s_0$ is closer to the root than $s_1$. Then the \emph{total length} of $\tilde{T}$ 
is
\begin{equation*}
\sum_{e = (s_0,s_1) \in \tilde{E}} [h(s_0) - h(s_1)]
= M + \sum_{s \in S_T} h(s) 
= M + \sum_{s \in S_T} 
\bar{\ds}(L_T[s_-],L_T[s_+])
= M + \Gamma(T;\ds),
\end{equation*}
where we used the fact that each non-root internal vertex of $\tilde{T}$ is counted 
twice positively and once negatively (since it has two immediate children and one immediate parent), while the root of $\tilde{T}$ is counted once. In other words, up to translation by $M$, 
$\Gamma(T;\ds)$ measures the total length of hierarchy $T$
associated to ultrametric $\ds$. 

More generally, on a heuristic level, if $\ds$ is not ultrametric (but perhaps close to one) and $T$ is any hierarchy we interpret 
$\bar{\ds}(L_T[s_-],L_T[s_+])$ as an 
estimate of the height of $s$ on $T$ based on the values $\ds|_{L_T[s_-]\times L_T[s_+]}$. 
Then we see $\Gamma(T;\ds)$ as an estimate
of the total length\footnote{Note that we are not imposing that estimated edge lengths be positive.} of $T$ under $\ds$. Minimizing $\Gamma$ hence corresponds roughly speaking to finding a hierarchy whose estimated total length is minimum under a fit to the input $\ds$. 
In addition to its connection to the fruitful concept of minimum evolution in phylogenetics, 
as pointed out in Section~\ref{sec:intro} this objective has the desirable property of being
consistent on ultrametrics. 

\paragraph{Other choices for $\hat{h}$}
Interpreting $\hat{h}$ as a height estimator suggests many more natural choices.
For instance, one can take a model-based 
approach such as the one advocated in the related work of~\cite{Degens:83,CastroNowak:04}. 
There, a simple error model is assumed (adapted to our setting):
the dissimilarity $\ds$ is in fact an ultrametric $\ds^*$ plus an entrywise additive
noise that is i.i.d. If $T$ is associated to $\ds^*$ and
$s \in S_T$, then a likelihood-based estimate of $h(s)$
can be obtained from the values
$\ds|_{L_T[s_-]\times L_T[s_+]}$, which all share the same mean $h(s)$ and are independent. Under the assumption that the
additive noise is Gaussian for instance, one recovers the least-squares estimate~\eqref{eq:dbar}. 
Taking the noise to be Laplace leads to the median.
As pointed out by~\cite{Degens:83},
other choices also lead to estimates that arise naturally in the hierarchical clustering
context.  If the density of the noise is assumed to be $0$ below $0$ and non-increasing above $0$, then the maximum likelihood estimate is the minimum of the observed values $\ds|_{L_T[s_-]\times L_T[s_+]}$. Note that all these examples satisfy condition~\eqref{eq:hhat-condition}
and therefore Theorem~\ref{thm:main} implies
that they produce length-based objectives
\begin{equation*}
	\Gamma(T;\ds)
	= \sum_{s \in S_T} 
	\hat{h}(T[s],\ds|_{L_T[s_-]\times L_T[s_+]}),
\end{equation*}
that are consistent
on ultrametrics. 

We note further that we allow in general the function $\hat{h}$
to depend on the structure of the subtree
rooted at the corresponding internal vertex. For instance,
one could consider a weighted average of the quantities
$\ds|_{L_T[s_-]\times L_T[s_+]}$ where the weights depend on the
graph distance between the leaves. In the phylogenetic
context, precisely such an estimator, namely 
\begin{equation}\label{eq:motiv-science}
	\hat{h}(T[s],\ds|_{L_T[s_-]\times L_T[s_+]})
	= \sum_{x\in L_T[s_-], y\in L_T[s_+]} 2^{-|x|_{L_T[s_-]}} 2^{-|y|_{L_T[s_+]}} \ds(x,y),
\end{equation}
where $|x|_{L_T[s_-]}$ denotes the graph distance between $s_-$ and $x$ in $T[s]$,
has been shown 
rigorously to lead to significantly better height estimates in certain regimes of parameters for standard models of sequence evolution~\cite{Roch:10}. The analysis
of this estimator accounts for the fact that
the dissimilarities in $\ds|_{L_T[s_-]\times L_T[s_+]}$ are not independent---but in fact highly correlated---under these models.
It can be shown (by induction on the size of the hierarchy) that $\sum_{x\in L_T[s_-], y\in L_T[s_+]} 2^{-|x|_{T[s]}} 2^{-|y|_{T[s]}} = 1$, and therefore Theorem~\ref{thm:main}  applies in this case as well.

\paragraph{Greedy algorithms}
Finally, we connect our class of objectives to 
standard approaches to hierarchical clustering. 
The first clustering approach we consider, average linkage, is an agglomerative method. 
\begin{enumerate}
	\item[0.] \textit{Average linkage}
	
	\item[1.] Input: dissimilarity $\ds$ on $L = \{1,\ldots,n\}$.
	
	\item[2.] Create $n$ singleton trees with leaves respectively $1,\ldots, n$.
	
	\item[3.] While there are at least two trees left:
	
	\begin{enumerate}
		\item[a-] Pick two trees $T_1, T_2$ with leaves $A_1, A_2$ \textbf{minimizing} $\bar{\ds}(A_1,A_2)$.
		
		\item[b-] Merge $T_1$ and $T_2$ through a new common root adjacent to their roots.
	\end{enumerate}
	
	\item[4.] Return the resulting tree.
\end{enumerate}
The second method we consider, recursive sparsest cut, is a divisive method.
\begin{enumerate}
	\item[0.] \textit{Recursive sparsest cut}
	
	\item[1.] Input: dissimilarity $\ds$ on $L = \{1,\ldots,n\}$.
	
	\item[2.] Find a partition $\{A_1, A_2\}$ of $L$ \textbf{maximizing} $\bar{\ds}(A_1,A_2)$.
	
	\item[3.] Recurse on $\ds|_{A_1 \times A_1}$ and $\ds|_{A_2 \times A_2}$ to obtain trees $T_{A_1}$ and $T_{A_2}$.
	
	\item[4.] Merge $T_{A_1}$ and $T_{A_2}$ through a new common root adjacent to their roots.
	
	\item[5.] Return the resulting tree.
\end{enumerate}
Note that Step (2) is NP-hard and one typically resorts to approximation algorithms~\cite{Dasgupta:16}.

We argue here that both these methods are greedy algorithms
for the \emph{same} global objective function. From an algorithmic point of view, these
methods proceed in an intuitive manner: average linkage 
starts from the bottom and iteratively merges clusters that
are as similar as possible according to $\bar{\ds}$; 
recursive sparsest cut starts from the top and iteratively 
splits clusters that are as different as possible according
to $\bar{\ds}$. From an optimization point of view, both methods seemingly use the same local criterion: $\bar{\ds}$. But, given that at each iteration one \emph{minimizes} while the other \emph{maximizes} this criterion, what is their \emph{common global objective}? 

We claim it is~\eqref{eq:ls-cost}.
At each iteration, average linkage forms a new cluster whose contribution to~\eqref{eq:ls-cost} is minimized among
all possible merging choices.
As for recursive sparsest cut: when splitting $A_1$ and $A_2$,
the value $\bar{\ds}(A_1,A_2)$ is (in the interpretation above)
the estimated height of the parent $s_{A_1,A_2}$ of the two corresponding
subtrees; by maximizing $\bar{\ds}(A_1,A_2)$, one then
greedily \emph{minimizes} the length of the newly added edge \emph{above}
$s_{A_1,A_2}$ and, hence, the contribution of that edge
to the total length.


Other choices of $\hat{h}$ lead to single linkage, complete linkage and median linkage as well as the more general agglomerative approach
of~\cite{CastroNowak:04}.
For instance, single linkage greedily minimizes the
estimated total length of a hierarchy whose heights
are fitted using maximum likelihood assuming the noise has any density that is $0$ below $0$ and is non-increasing above $0$. The choice~\eqref{eq:motiv-science} on the other hand leads to WPGMA. 


\section{Proof of Theorem~\ref{thm:main}}
\label{sec:proof}

In this section, we prove Theorem~\ref{thm:main}.
As we noted above, it suffices to prove consistency on ultrametrics, as it implies unit neutrality.
\begin{proof}[Proof of Theorem~\ref{thm:main}]
We first reduce the proof to a special $\hat{h}$. 	
\begin{claim}[Reduction to minimum]
It suffices to prove 
Theorem~\ref{thm:main}
for the choice $\hat{h} = \hat{h}_m$ where
\begin{equation}\label{eq:hhat-special}
\hat{h}_m(T[s],\ds|_{L_T[s_-]\times L_T[s_+]})
=
\min \ds|_{L_T[s_-]\times L_T[s_+]}.
\end{equation}
\end{claim}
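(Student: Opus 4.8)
The plan is to compare the objective built from a general $\hat{h}$ satisfying~\eqref{eq:hhat-condition} against the one built from $\hat{h}_m$, evaluated both on the associated hierarchy $T$ and on an arbitrary competitor $T'$, and then to chain these comparisons with the assumed consistency of the $\hat{h}_m$-objective. Write $\Gamma_{\hat{h}}$ and $\Gamma_{\hat{h}_m}$ for the length-based objectives defined through~\eqref{eq:new-cost} using $\hat{h}$ and $\hat{h}_m$ respectively.

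First I would use the structure of $\ds$ on the associated hierarchy $T$. By~\eqref{eq:derivation-dsbar}, for every $s \in S_T$ all dissimilarities $\ds(x,y)$ with $x\in L_T[s_-]$ and $y\in L_T[s_+]$ equal the common value $h(s)$, so the minimum and maximum of $\ds|_{L_T[s_-]\times L_T[s_+]}$ coincide. Condition~\eqref{eq:hhat-condition} then forces
\[
\hat{h}(T[s],\ds|_{L_T[s_-]\times L_T[s_+]}) = h(s) = \hat{h}_m(T[s],\ds|_{L_T[s_-]\times L_T[s_+]}),
\]
and summing over $s\in S_T$ gives $\Gamma_{\hat{h}}(T;\ds) = \Gamma_{\hat{h}_m}(T;\ds)$: the two objectives agree on $T$.

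Next I would treat an arbitrary hierarchy $T'$, where the dissimilarities at a split need not be constant. Here condition~\eqref{eq:hhat-condition} only yields the term-by-term lower bound
\[
\hat{h}(T'[s'],\ds|_{L_{T'}[s'_-]\times L_{T'}[s'_+]}) \geq \min \ds|_{L_{T'}[s'_-]\times L_{T'}[s'_+]} = \hat{h}_m(T'[s'],\ds|_{L_{T'}[s'_-]\times L_{T'}[s'_+]}),
\]
which after summing over $s'\in S_{T'}$ gives $\Gamma_{\hat{h}}(T';\ds) \geq \Gamma_{\hat{h}_m}(T';\ds)$. Combining the two steps with the assumed consistency of $\Gamma_{\hat{h}_m}$ produces the chain $\Gamma_{\hat{h}}(T;\ds) = \Gamma_{\hat{h}_m}(T;\ds) \leq \Gamma_{\hat{h}_m}(T';\ds) \leq \Gamma_{\hat{h}}(T';\ds)$, which is exactly consistency on ultrametrics for $\Gamma_{\hat{h}}$.

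There is no serious obstacle here; the single idea driving everything is that on the associated hierarchy each split carries a \emph{constant} dissimilarity, which simultaneously pins $\hat{h}$ to the value $h(s)$ (via the upper half of~\eqref{eq:hhat-condition}) while leaving $\hat{h}_m=\min$ as a \emph{uniform} lower bound on arbitrary competitors (via the lower half). The only point I would check explicitly is that $\hat{h}_m$ does itself satisfy~\eqref{eq:hhat-condition}, so that the reduced statement is a bona fide special case of the theorem; this is immediate since $\min \ds|_M$ lies in $[\min \ds|_M, \max \ds|_M]$.
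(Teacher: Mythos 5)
Your proof is correct and follows essentially the same route as the paper: on the associated hierarchy the constancy of $\ds$ across each split (min equals max) pins $\hat{h}$ to $\hat{h}_m$ via condition~\eqref{eq:hhat-condition}, while on an arbitrary competitor the condition gives the term-by-term lower bound $\hat{h} \geq \hat{h}_m$, and chaining these with consistency of the $\hat{h}_m$-objective yields the claim. Your additional check that $\hat{h}_m$ itself satisfies~\eqref{eq:hhat-condition} is a small, worthwhile point the paper leaves implicit.
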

\begin{proof}
Let $\hat{h}$ be an arbitrary choice of function  satisfying~\eqref{eq:hhat-condition} and let $\ds$
be an ultrametric with associated hierarchy $T$. Recall that we seek to show that $\Gamma(T;\ds) \leq \Gamma(T';\ds)$ for any hierarchy $T'$.

By~\eqref{eq:dist-height}, for any $s \in S_T$ and for any $x, x' \in L_T[s_-]$ and $y, y' \in L_T[s_+]$, we have
\begin{equation*}
\ds(x,y) = \ds(x',y') = \min \ds|_{L_T[s_-]\times L_T[s_+]}
= \max \ds|_{L_T[s_-]\times L_T[s_+]},
\end{equation*}
since $x\land_T y = x' \land_T y' = s$, where the first equality
over all choices of $x,x'y,y'$ implies the other two. 
Therefore, under the ultrametric associated to $T$, 
this arbitrary $\hat{h}$ in fact satisfies
\begin{equation*}
\hat{h}(T[s],\ds|_{L_T[s_-]\times L_T[s_+]})
=
\min \ds|_{L_T[s_-]\times L_T[s_+]},
\end{equation*}
by condition~\eqref{eq:hhat-condition}. This holds for all $s \in S_T$, so that 
\begin{equation}\label{eq:reduction-1}
\Gamma(T;\ds)
= \sum_{s \in S_T} 
\hat{h}(T[s],\ds|_{L_T[s_-]\times L_T[s_+]})
= \sum_{s \in S_T} 
\hat{h}_m(T[s],\ds|_{L_T[s_-]\times L_T[s_+]}),
\end{equation}
takes the same value for any $\hat{h}$.

On the other hand,
for any other hierarchy $T'$ and for any internal vertex $s' \in S_{T'}$ it holds that
\begin{equation*}
\hat{h}(T'[s'],\ds|_{L_{T'}[s'_-]\times L_{T'}[s'_+]})
\geq
\min \ds|_{L_{T'}[s'_-]\times L_{T'}[s'_+]},
\end{equation*}
by condition~\eqref{eq:hhat-condition}. Hence,
\begin{equation}\label{eq:reduction-2}
	\Gamma(T';\ds)
= \sum_{s' \in S_{T'}} 
\hat{h}(T'[s],\ds|_{L_{T'}[s_-]\times L_{T'}[s_+]})
\geq \sum_{s' \in S_{T'}} 
\hat{h}_m(T'[s'],\ds|_{L_{T'}[s'_-]\times L_{T'}[s'_+]}).
\end{equation}

Combining~\eqref{eq:reduction-1} and~\eqref{eq:reduction-2}, we see that establishing the
desired inequality under the choice~\eqref{eq:hhat-special} 
$$
\sum_{s \in S_T} 
\hat{h}_m(T[s],\ds|_{L_T[s_-]\times L_T[s_+]})
\leq \sum_{s' \in S_{T'}} 
\hat{h}_m(T'[s'],\ds|_{L_{T'}[s'_-]\times L_{T'}[s'_+]}),
$$
implies that the desired inequality $\Gamma(T;\ds) \leq \Gamma(T';\ds)$ holds under $\hat{h}$. That proves the claim.
\end{proof}

For the rest of the proof, we assume that $\hat{h} = \hat{h}_m$.
We prove the result by induction on the number of leaves.
The proof proceeds by considering the two subtrees 
hanging from the root in the hierarchy associated to the ultrametric $\ds$ and comparing their respective 
costs to that of the 
subtrees of any other hierarchy on the same sets of leaves. 

Let $\ds$ be an ultrametric dissimilarity on $L = [n]$. Let
$T$ be an associated hierarchy on $L$ with 
height function $h$. 
We start with the base of the induction argument.
\begin{claim}[Base case]
	If $n=2$, then $\Gamma(T;\ds) \leq \Gamma(T':\ds)$
	for any hierarchy $T'$ on $L$.
\end{claim}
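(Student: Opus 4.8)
The plan is to verify the base case directly, since with $n=2$ the combinatorial structure is essentially trivial and there is only one possible hierarchy. When $L = \{1,2\}$, any rooted binary tree on two leaves consists of a single root $s$ with the two leaves $1$ and $2$ as its immediate descendants. Thus $S_T = \{s\}$ contains exactly one internal vertex, and the same is true for any other hierarchy $T'$ on $L$. Since there is, up to the identification of leaves, only one binary tree shape on two leaves, the associated hierarchy $T$ and the competitor $T'$ are in fact the same tree, so it suffices to check that the objective is well defined and that the inequality holds with equality.

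Concretely, I would compute $\Gamma(T;\ds)$ under the reduction already established, namely $\hat h = \hat h_m$. For the unique internal vertex $s$, we have $L_T[s_-] = \{1\}$ and $L_T[s_+] = \{2\}$ (or vice versa), so the set of cross-pair dissimilarities $\ds|_{L_T[s_-]\times L_T[s_+]}$ consists of the single value $\ds(1,2)$. Hence
\begin{equation*}
\Gamma(T;\ds) = \hat h_m(T[s],\ds|_{L_T[s_-]\times L_T[s_+]}) = \min \ds|_{L_T[s_-]\times L_T[s_+]} = \ds(1,2).
\end{equation*}
The identical computation applies to any other hierarchy $T'$ on $L$, giving $\Gamma(T';\ds) = \ds(1,2)$ as well.

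Therefore $\Gamma(T;\ds) = \ds(1,2) = \Gamma(T';\ds)$, and in particular $\Gamma(T;\ds) \leq \Gamma(T';\ds)$, which establishes the base case. The inequality is in fact an equality here, reflecting the fact that with a single merge there is no freedom in the clustering. I do not anticipate any real obstacle in this step: the only thing to be careful about is confirming that the minimum over a singleton set of values is simply that value, and that the ultrametric property plays no nontrivial role when $n=2$ (it guarantees $\ds(1,2) > 0$ by the definition of dissimilarity, but any such input works). The substance of the induction will reside entirely in the inductive step, where one must compare the root split of the ultrametric-associated hierarchy against an arbitrary competing split; the base case merely anchors the recursion.
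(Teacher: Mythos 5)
Your proof is correct and rests on the same key observation as the paper's: when $n=2$ there is only one hierarchy on $L$, so $T' = T$ and the inequality holds trivially (the paper simply calls the statement vacuous at this point). Your explicit computation of $\Gamma(T;\ds) = \ds(1,2)$ is harmless but unnecessary, since once $T' = T$ the two sides are equal regardless of their common value.
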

\begin{proof}
	When $n=2$, there is only one hierarchy, so the statement
	is vacuous.
\end{proof}

Now suppose $n > 2$ and assume that the result holds
by induction for all hierarchies less than $n$ leaves.
Before we begin, it will
be convenient to define a notion of hierarchy 
allowing degree $2$ vertices.
\begin{definition}[Extended hierarchy]
	An \emph{extended hierarchy} on $L$ is a rooted tree $T'' = (V'',E'')$ with $n$ leaves, which we identify with the set $L$,
	such that all internal vertices have degree at most $3$ and the root has degree $2$. 
\end{definition}
We generalize the objective function to extended hierarchies $T''$
by letting
\begin{equation*}
\Gamma(T'';\ds)
= \sum_{s \in S_{T''}^2} 
\hat{h}(T''[s],\ds|_{L[s_-]\times L[s_+]}),
\end{equation*}
where $S_{T''}^2$ is the set of internal vertices of $T''$ \emph{with exactly
two immediate descendants}. 
That is, we ignore the degree $2$ vertices
in the objective, except for the root. We refer to these
ignored vertices as \emph{muted}.
We also trivially generalize to extended hierarchies 
the notion of an associated ultrametric.
The presence of degree $2$ vertices
will arise as a by-product of the following definition.
\begin{definition}[Restriction]
Let $T''$ be a hierarchy on $L''$ and
let $A'' \subseteq L''$. The restriction of $T''$
to $A''$, denoted $T''_{A''}$, is the extended hierarchy
obtained from $T''$ by keeping only those edges and vertices
lying on a path between two leaves in $A''$. 
\end{definition}
\noindent Note that applying the restriction procedure to a hierarchy
can indeed produce degree $2$ vertices and that the root of
a restriction has degree $2$ by definition.

We are now ready to proceed with the induction.
Let $\rho$ be the root of $T$
and
let $T_- = T[\rho_-]$,
$T_+ = T[\rho_+]$, $L_- = L_T[\rho_-]$ and $L_+ = L_T[\rho_+]$.
Let $T'$ be a distinct hierarchy on $L$.
Note that, for any subset $A \subseteq L$, the dissimilarity
$\ds|_{A\times A}$ is an ultrametric on $A$ as it continues to
satisfy condition~\eqref{eq:ultrametric}. 
Note, moreover, that $T_A$ is an extended hierarchy associated
with $\ds|_{A\times A}$, as the same heights can be used
on the restriction.
Hence, we can apply
the induction hypothesis to $L_-$ and $L_+$. That is, 
we have by induction that:
\begin{claim}[Induction on the subtrees hanging from the root]
\label{claim:induction-subtrees}
\begin{equation*}
\Gamma(T_-;\ds|_{L_-\times L_-})
\leq \Gamma(T'_{L_-};\ds|_{L_-\times L_-})
\quad\text{and}\quad
\Gamma(T_+;\ds|_{L_+\times L_+})
\leq \Gamma(T'_{L_+};\ds|_{L_+\times L_+}).
\end{equation*}
\end{claim}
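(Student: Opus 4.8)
The plan is to read Claim~\ref{claim:induction-subtrees} as nothing more than a direct invocation of the induction hypothesis, applied separately on the two leaf sets $L_-$ and $L_+$, together with some care about the objects that restriction produces. Concretely, I would check the three ingredients needed to invoke the inductive statement on each side. First, as already observed in the text, $\ds|_{L_-\times L_-}$ is again an ultrametric (it continues to satisfy~\eqref{eq:ultrametric}) and $T_-$ is associated to it, using the restriction of the height function $h$; symmetrically for $L_+$. Second, since $T$ is a binary hierarchy with $n>2$ leaves, its root $\rho$ splits $L$ into $L_-$ and $L_+$ with $|L_-|+|L_+|=n$ and $|L_-|,|L_+|\ge 1$, so that $|L_-|<n$ and $|L_+|<n$ and the inductive statement is available on each side. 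Third, the competing objects are the restrictions $T'_{L_-}$ and $T'_{L_+}$ of the fixed hierarchy $T'$.

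The one genuine subtlety is that each competitor $T'_{L_-}$ is an \emph{extended} hierarchy rather than an ordinary one: restriction can create muted degree-$2$ vertices, and the root of a restriction is degree $2$ by definition. The induction hypothesis, however, is phrased as a comparison of an associated hierarchy against ordinary hierarchies. I would bridge this gap by exploiting that, under the reduced choice $\hat{h}=\hat{h}_m$ we have already passed to, the value of $\Gamma$ is insensitive to muted vertices. Indeed, contracting any degree-$2$ vertex leaves $L_T[s_-]$ and $L_T[s_+]$ unchanged for every remaining internal vertex $s$ with two immediate descendants, hence leaves each summand $\min\ds|_{L_T[s_-]\times L_T[s_+]}$ unchanged. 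Suppressing all muted vertices of $T'_{L_-}$ therefore produces an ordinary hierarchy $\widehat{T'_{L_-}}$ on $L_-$ with $\Gamma(\widehat{T'_{L_-}};\ds|_{L_-\times L_-})=\Gamma(T'_{L_-};\ds|_{L_-\times L_-})$, to which the ordinary-hierarchy induction hypothesis applies verbatim. (Equivalently, one could phrase the statement carried through the induction so that competitors range over extended hierarchies from the start; the two formulations are interchangeable precisely because of this insensitivity to muted vertices.)

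With that in hand, the induction hypothesis on $L_-$ gives $\Gamma(T_-;\ds|_{L_-\times L_-})\le\Gamma(\widehat{T'_{L_-}};\ds|_{L_-\times L_-})=\Gamma(T'_{L_-};\ds|_{L_-\times L_-})$, and the identical argument on $L_+$ yields the second inequality. The only boundary case to dispose of separately is when one side, say $L_-$, is a singleton (which occurs for instance when $n=3$, below the $n=2$ base case): then $T_-$ has no internal vertex with two immediate descendants, so $\Gamma(T_-;\ds|_{L_-\times L_-})=0$, and $T'_{L_-}$ contributes nothing either, so the inequality reads $0\le 0$ and holds trivially.

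I expect the main obstacle to be not the inequality itself but the bookkeeping around extended hierarchies: making precise that restriction yields a well-defined extended hierarchy on each side, that muted vertices (including the forced degree-$2$ root of a restriction) can be dropped without changing $\Gamma$ under $\hat{h}_m$, and that after suppression the competitor is a legitimate hierarchy on the correct, strictly smaller leaf set. This is where the argument must be stated carefully so that the induction hypothesis, which speaks about ordinary hierarchies, can legitimately be applied to the restricted competitors $T'_{L_-}$ and $T'_{L_+}$ that arise from splitting at the root.
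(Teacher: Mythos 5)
Your proof is correct and takes essentially the same route as the paper's: the restriction of an ultrametric is an ultrametric, $T_- = T_{L_-}$ and $T_+ = T_{L_+}$ remain associated hierarchies, and the objective of the extended hierarchy $T'_{L_\pm}$ equals that of the ordinary hierarchy obtained by suppressing muted degree-$2$ vertices, so the induction hypothesis applies. Your additional bookkeeping---why suppression leaves each summand of $\hat{h}_m$ unchanged, and the singleton boundary case---merely makes explicit what the paper's one-paragraph proof leaves implicit.
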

\begin{proof}
In addition to the observations above,
we used 1) on the LHS of each inequality the fact that $T_- = T_{L_-}$ and
$T_+ = T_{L_+}$; and 2) on the RHS the fact that internal
vertices of degree $2$ (except the root) are ignored in the
objective (which is equivalent to suppressing those vertices
in the extended hierarchy and computing the objective over
the resulting hierarchy).
\end{proof}

We now relate the quantities in the previous claim 
to the objective values on $T$ and $T'$.
Let $\Delta := \max \ds$.
\begin{claim}[Relating $T$ and $T'$: Applying induction]
It holds that
\begin{equation*}
\Gamma(T;\ds)
\leq \Delta 
+ \Gamma(T'_{L_-};\ds|_{L_-\times L_-})
+ \Gamma(T'_{L_+};\ds|_{L_+\times L_+}).
\end{equation*}
\end{claim}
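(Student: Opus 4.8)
The plan is to use the additivity of $\Gamma$ over the internal vertices of $T$, peel off the contribution of the root $\rho$, and recognize the two remaining blocks as exactly the restricted objectives appearing in Claim~\ref{claim:induction-subtrees}. Since $T$ is a binary hierarchy whose root $\rho$ has the roots of $T_-$ and $T_+$ as its two children, its internal vertices partition as $S_T = \{\rho\}\,\cup\, S_{T_-}\,\cup\, S_{T_+}$ (a disjoint union). With $\hat{h}=\hat{h}_m$ this immediately gives the decomposition
\begin{equation*}
\Gamma(T;\ds)
= \min \ds|_{L_-\times L_+}
+ \sum_{s \in S_{T_-}} \min \ds|_{L_T[s_-]\times L_T[s_+]}
+ \sum_{s \in S_{T_+}} \min \ds|_{L_T[s_-]\times L_T[s_+]}.
\end{equation*}

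Next I would identify each piece. For $s \in S_{T_-}$ both $L_T[s_-]$ and $L_T[s_+]$ are contained in $L_-$, so the summand is unaffected by replacing $\ds$ with its restriction $\ds|_{L_-\times L_-}$; hence the second sum is exactly $\Gamma(T_-;\ds|_{L_-\times L_-})$, and symmetrically the third sum equals $\Gamma(T_+;\ds|_{L_+\times L_+})$. For the root term, every crossing pair $(x,y)$ with $x\in L_-$ and $y\in L_+$ satisfies $x\land_T y = \rho$, so by \eqref{eq:dist-height} all such pairs share the single value $h(\rho)$; therefore $\min \ds|_{L_-\times L_+} = h(\rho)$. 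Finally, since $\ds(x,y)=h(x\land_T y)$ for every pair and the height function is largest at the root (the ultrametric inequality forces $h$ to be nondecreasing along the path from any internal vertex up to $\rho$), the maximum dissimilarity $\Delta=\max\ds$ is attained at $\rho$, so $h(\rho)=\Delta$ and the root term equals $\Delta$.

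Putting these identifications together yields the exact equality
\begin{equation*}
\Gamma(T;\ds)
= \Delta
+ \Gamma(T_-;\ds|_{L_-\times L_-})
+ \Gamma(T_+;\ds|_{L_+\times L_+}),
\end{equation*}
after which the claimed inequality follows in one step by substituting the two bounds $\Gamma(T_-;\ds|_{L_-\times L_-})\leq \Gamma(T'_{L_-};\ds|_{L_-\times L_-})$ and $\Gamma(T_+;\ds|_{L_+\times L_+})\leq \Gamma(T'_{L_+};\ds|_{L_+\times L_+})$ supplied by Claim~\ref{claim:induction-subtrees}.

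The only genuinely delicate point is the evaluation of the root contribution as $\Delta$: this is where both facets of the ultrametric structure are needed simultaneously — that every pair crossing the root takes the common value $h(\rho)$ (so the $\min$ at $\rho$ is unambiguous and equal to $h(\rho)$), and that $h(\rho)$ is the global maximum of $\ds$ because the root sits at the top of the height function. Everything else is bookkeeping: the vertex partition of $S_T$ and the invariance of each non-root summand under restricting $\ds$ to the leaf set of the relevant subtree.
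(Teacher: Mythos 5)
Your proof is correct and takes essentially the same route as the paper: both decompose $\Gamma(T;\ds)$ into the root contribution plus the objectives of the two subtrees hanging from the root, evaluate the root term as $\Delta$ using the fact that the root's height is the largest under the ultrametric, and then invoke Claim~\ref{claim:induction-subtrees}. Your write-up merely makes explicit the bookkeeping the paper leaves implicit (the partition $S_T = \{\rho\}\cup S_{T_-}\cup S_{T_+}$, the invariance of non-root summands under restriction, and the monotonicity of heights toward the root).
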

\begin{proof}
Because $T$ is associated to ultrametric $\ds$, 
the corresponding height of the root of $T$ is also
the largest height on $T$ and, hence,
\begin{equation*}
\hat{h}(T[\rho],\ds|_{L[\rho_-]\times L[\rho_+]})
=
\min \ds|_{L[\rho_-]\times L[\rho_+]}
=
\max \ds
= \Delta.
\end{equation*}  
Therefore, adding up the contributions to $\Gamma(T;\ds)$ of the root
and of the two subtrees hanging from the root, we get
\begin{equation*}
\Gamma(T;\ds)
= \Delta 
+ \Gamma(T_-;\ds|_{L_-\times L_-})
+ \Gamma(T_+;\ds|_{L_+\times L_+}).
\end{equation*}
We use Claim~\ref{claim:induction-subtrees} to conclude.
\end{proof}
So it remains to relate the RHS in the previous claim
to the objective value of $T'$. This involves a case analysis. We start with a simple case.
\begin{claim}[Relating $T$ and $T'$: Equality case]
	If there is $s \in S_{T'}$ such that $L_{T'}[s] = L_-$ or $L_+$, then it holds that
	\begin{equation}\label{eq:equality-case}
		\Gamma(T';\ds)
		= \Delta 
		+ \Gamma(T'_{L_-};\ds|_{L_-\times L_-})
		+ \Gamma(T'_{L_+};\ds|_{L_+\times L_+}).
	\end{equation}
\end{claim}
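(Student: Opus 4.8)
The plan is to exploit the defining feature of the ultrametric associated to $T$: since $L_-$ and $L_+$ are exactly the leaf sets of the two subtrees hanging from the root $\rho$ of $T$, the most recent common ancestor of any $x\in L_-$ and $y\in L_+$ is $\rho$, so by~\eqref{eq:dist-height} every such ``across'' pair satisfies $\ds(x,y)=h(\rho)=\Delta$. Since $\hat{h}=\hat{h}_m$, every summand of $\Gamma(T';\ds)$ is a minimum of $\ds$ over the cross pairs at an internal vertex, and all across pairs are tied at the maximal value $\Delta$; this single fact drives the whole computation.

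I would first fix an internal vertex $s\in S_{T'}$ with $L_{T'}[s]=L_-$ (the case $L_{T'}[s]=L_+$ is symmetric, and~\eqref{eq:equality-case} is itself symmetric in $L_-,L_+$). Next I would partition $S_{T'}$ into three classes according to how $L_{T'}[s']$ meets the split $L=L_-\cup L_+$: those with $L_{T'}[s']\subseteq L_-$, those with $L_{T'}[s']\subseteq L_+$, and the ``mixed'' ones meeting both. A short lowest-common-ancestor argument shows that the first class is exactly the set of internal vertices of $T'[s]$, that $T'_{L_-}$ is precisely $T'[s]$, and hence that these vertices contribute (with identical cross pairs, all inside $L_-$) exactly $\Gamma(T'_{L_-};\ds|_{L_-\times L_-})$. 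The same reasoning shows the mixed class is precisely the vertices on the path $\rho'=p_0,p_1,\dots,p_{k-1}$ from the root of $T'$ down to, but excluding, $s$, and that the off-path siblings $q_0,\dots,q_{k-1}$ have leaf sets partitioning $L_+$, while the within-$L_+$ class is the union of the internal vertices of the subtrees $T'[q_i]$.

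Then I would match contributions against $T'_{L_+}$. At a mixed vertex $p_i$, one child leads to $q_i$ (leaves inside $L_+$) and the other to $p_{i+1}$, whose leaves are $L_-$ together with $\bigsqcup_{j>i}L_{T'}[q_j]\subseteq L_+$; so the cross pairs at $p_i$ split into across pairs, all equal to $\Delta$, and within-$L_+$ pairs that are exactly the cross pairs seen at $p_i$ in $T'_{L_+}$. Hence the minimum at $p_i$ equals the minimum taken in $T'_{L_+}$ for every $i\le k-2$, whereas at the lowest vertex $p_{k-1}$ one child is all of $L_-$, so every cross pair is an across pair and the contribution is exactly $\Delta$. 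The within-$L_+$ vertices contribute identically in $T'$ and in $T'_{L_+}$. Summing the three classes, and noting that in $T'_{L_+}$ the vertex $p_{k-1}$ becomes muted (degree two after restriction) while $p_0,\dots,p_{k-2}$ survive as genuine internal vertices, the path-vertex and within-$L_+$ totals reassemble into $\Gamma(T'_{L_+};\ds|_{L_+\times L_+})$ plus a single leftover $\Delta$, yielding~\eqref{eq:equality-case}.

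The step I expect to be the main obstacle is the bookkeeping forced by the fact that $s$ need not be a child of $\rho'$: in general $s$ sits at the bottom of a nontrivial path, and $T'_{L_+}$ acquires muted degree-two vertices exactly along that path. Pinning down that precisely one mixed vertex, the lowest one $p_{k-1}$, supplies the extra $\Delta$, because it is the unique mixed vertex one of whose children has all its leaves in $L_-$ and it is exactly the vertex muted in $T'_{L_+}$, is the delicate point; the remainder is a routine check that every other summand is preserved verbatim between $T'$ and the two restrictions.
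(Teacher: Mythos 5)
Your proposal is correct and is essentially the paper's own argument (with the roles of $L_-$ and $L_+$ swapped): the parent of $s$ is the unique vertex contributing the extra $\Delta$, the subtree rooted at $s$ reproduces one restriction's objective verbatim, and every remaining vertex contributes equally to $\Gamma(T';\ds)$ and to the other restriction's objective because across pairs all sit at the maximal value $\Delta$ and hence never determine the minima defining $\hat{h}_m$. The paper states this in a few lines, leaving your explicit root-to-$s$ path and off-path-subtree bookkeeping implicit; the only hair to note is that when $s$ is a child of the root of $T'$, the parent vertex is absent from (rather than muted in) the relevant restriction, which changes nothing since it contributes zero either way.
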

\begin{proof}
Observe that $s$ cannot be the root of $T'$ as otherwise we would have $L_- = \emptyset$. So $s$ has a parent. 
Let $\tilde{s}$ be the parent of $s$ with descendants $\tilde{s}_-$ and $\tilde{s}_+$, and assume without loss of generality that $L_{T'}[\tilde{s}_-] \subseteq L_-$ and $L_{T'}[\tilde{s}_+] = L_+$ (i.e., $\tilde{s}_+ = s$). Then the contribution to $\Gamma(T';\ds)$ of $\tilde{s}$ is $\Delta$. Furthermore, the
contribution to $\Gamma(T';\ds)$ of those vertices in $T'_{L_+}$ is $\Gamma(T'_{L_+};\ds|_{L_+\times L_+})$. Finally, in $T'_{L_-}$ vertex $s$ has degree 2 and so is muted. The remaining vertices of $T'_{L_-}$ contribute $\Gamma(T'_{L_-};\ds|_{L_-\times L_-})$ to both sides of~\eqref{eq:equality-case}. 
\end{proof}

The general case analysis follows.
We assume for the rest of the proof
that: 
\begin{equation}\label{eq:not-1-move}
\nexists s \in S_{T'}, L_{T'}[s] = L_- \text{ or } L_+.
\end{equation}
\begin{claim}[Relating $T$ and $T'$: Case analysis]
Under condition~\eqref{eq:not-1-move}, it holds that
\begin{equation}\label{eq:second-step}
\Gamma(T';\ds)
\geq \Delta 
+ \Gamma(T'_{L_-};\ds|_{L_-\times L_-})
+ \Gamma(T'_{L_+};\ds|_{L_+\times L_+}).
\end{equation}
\end{claim}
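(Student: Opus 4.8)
The plan is to expand both sides of~\eqref{eq:second-step} in terms of the bipartition $\{L_-,L_+\}$, viewed as a two-coloring of the leaves of $T'$, and to show that the quantity $\Gamma(T';\ds) - \Gamma(T'_{L_-};\ds|_{L_-\times L_-}) - \Gamma(T'_{L_+};\ds|_{L_+\times L_+})$ localizes to a small set of internal vertices whose net contribution is at least $\Delta$. The only feature of the ultrametric $\ds$ associated to $T$ that I would use is this: every pair with one leaf in $L_-$ and one in $L_+$ has $\ds$-value exactly $\Delta = \max\ds$ (their meet in $T$ is the root $\rho$, of height $\Delta$), whereas every pair lying inside $L_-$ or inside $L_+$ has $\ds$-value at most $\Delta$.

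First I would record, for each $s'\in S_{T'}$, how it is charged to the three objectives. Its contribution to $\Gamma(T';\ds)$ is $\min\ds|_{L_{T'}[s'_-]\times L_{T'}[s'_+]}$. By the restriction procedure and the muting convention, $s'$ contributes to $\Gamma(T'_{L_-};\cdot)$ exactly when both $L_{T'}[s'_-]$ and $L_{T'}[s'_+]$ meet $L_-$, in which case its contribution is $m_-(s') := \min\ds$ over the cross pairs in $(L_{T'}[s'_-]\cap L_-)\times(L_{T'}[s'_+]\cap L_-)$; symmetrically one defines $m_+(s')$ for $L_+$. Classifying $s'$ by the colors appearing below its two children and using the ultrametric fact above, I would check that: (i) if $T'[s']$ is monochromatic, the contribution to $\Gamma(T')$ and to the relevant restriction coincide and cancel in the difference; (ii) if one child is all-$L_-$ and the other all-$L_+$ (a \emph{clean split}), the contribution to $\Gamma(T')$ is $\Delta$ while both restrictions receive $0$; (iii) if exactly one child is monochromatic and the other mixed, the contributions cancel; and (iv) if both children are mixed, the contribution to $\Gamma(T')$ is $\min(m_-(s'),m_+(s'))$ while the restrictions together absorb $m_-(s')+m_+(s')$, leaving a deficit $\max(m_-(s'),m_+(s')) \le \Delta$. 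Thus the difference equals $\Delta\cdot N_{\mathrm{clean}} - \sum_{s'\ \mathrm{both\ mixed}} \max(m_-(s'),m_+(s'))$, where $N_{\mathrm{clean}}$ is the number of clean splits.

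The crux is then a purely combinatorial count, which I would carry out by contracting each maximal monochromatic subtree of $T'$ to a single colored leaf, producing a binary tree $T''$ whose internal vertices are exactly the mixed vertices of $T'$ (and which is nontrivial, since the root of $T'$ is mixed). For any binary tree the number of cherries, i.e.\ vertices with two leaf-children, exceeds the number of vertices with two internal children by exactly one; moreover every cherry of $T''$ is bichromatic, since two same-colored contracted leaves would merge into a strictly larger monochromatic subtree, contradicting maximality. Hence the clean splits are precisely the cherries of $T''$, so $N_{\mathrm{clean}} = q+1$ where $q$ is the number of both-mixed vertices. Since each deficit $\max(m_-(s'),m_+(s'))$ is at most $\Delta$, the difference is at least $\Delta(q+1) - q\,\Delta = \Delta$, which is exactly~\eqref{eq:second-step}.

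I expect the main obstacle to be the bookkeeping in the vertex classification---in particular, making sure each internal vertex of $T'$ is charged consistently to the three objectives through the muting convention---together with the combinatorial identity guaranteeing that clean splits outnumber both-mixed vertices by exactly one, which is what converts the per-vertex deficits (each at most $\Delta$) into the required surplus of $\Delta$. Condition~\eqref{eq:not-1-move} serves to separate this case from the boundary situation already treated in the equality case; the argument above in fact yields the bound uniformly, degenerating to equality precisely when a single monochromatic class forms one subtree.
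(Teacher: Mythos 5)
Your proof is correct, and its skeleton coincides with the paper's: your four color classes are exactly the paper's partition of $S_{T'}$ into $R_1 \cup R_{2,om}$ (your cases (i) and (iii)), $R_{2,tm}$ (your clean splits) and $R_{2,nm}$ (your both-mixed vertices); your per-vertex accounting (cancellation, a $+\Delta$ surplus per clean split, a deficit $\max(m_-(s'),m_+(s'))\le\Delta$ per both-mixed vertex) matches the paper's; and your identity $N_{\mathrm{clean}}=q+1$ is precisely the paper's key relation~\eqref{eq:second-step-key}, $r_{2,tm}=1+r_{2,nm}$. The genuine difference lies in how that identity is established, and it has a real payoff. The paper double-counts non-muted vertices across the two restrictions, using that an extended hierarchy on $n'$ leaves has $n'-1$ non-muted internal vertices; for that count to come out to $n-2$ it must know that every $R_1$ vertex is non-muted in the restriction it belongs to, and this is exactly where condition~\eqref{eq:not-1-move} is invoked (without it a vertex can even lie in \emph{neither} restriction---take the root of $T'$ when its two subtrees have leaf sets $L_-$ and $L_+$---so the paper's four classes would not partition $S_{T'}$). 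You instead contract the maximal monochromatic subtrees and count cherries of the resulting full binary tree against its vertices with two internal children ($c_2=c_0+1$), an identity intrinsic to the two-coloring that never uses~\eqref{eq:not-1-move}; likewise your color-based classification makes the value bookkeeping immediate, whereas the paper's restriction-based classification needs the condition just to recover the color structure of the children. Consequently your bound holds for \emph{every} hierarchy $T'$, which would let the proof of the theorem dispense with the separate ``Equality case'' claim, since the induction only needs the inequality. One small caveat: your closing assertion that equality holds \emph{precisely} when some subtree of $T'$ has leaf set $L_-$ or $L_+$ is not quite accurate---equality can also occur degenerately when every both-mixed vertex has $\max(m_-(s'),m_+(s'))=\Delta$, which the ultrametric does not forbid---but this remark plays no role in the proof of the stated inequality.
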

\begin{proof}
Recall that $\Gamma(T';\ds)$ is a sum over internal
vertices of $T'$. We divide up those vertices into
several classes. Below, we identify the vertices
in the restrictions to the original vertices and we write $s \in T''$ to indicate that $s$ is a vertex of $T''$.
Observe that, by definition, $T_- = T_{L_-}$ and
$T_+ = T_{L_+}$ do not share vertices---but that $T'_{L_-}$ and
$T'_{L_+}$ might. 
Recall that, for $s \in S_{T'}$, we denote by $s_-$ and $s_+$ the immediate descendants of $s$ in $T'$. 
\begin{enumerate}
	\item \textbf{Appears in one subtree}: Let $R_1$ be the elements $s$ of $S_{T'}$ such that either (i) $s \in T'_{L_-}$ but
	$s \notin T'_{L_+}$, or (ii) $s \in T'_{L_+}$ but
	$s \notin T'_{L_-}$. It will be important below whether or not $s$ is muted. Case (i) means that there is a path on $T'$ between two leaves in $L_-$ that goes through $s$---but not between two leaves in $L_+$. Note that a path going through $s$ necessarily has an endpoint in $L_{T'}[s_-]$ or $L_{T'}[s_+]$, or both. We claim that, for such an $s$, we have that both $L_{T'}[s_-]$ and $L_{T'}[s_+]$ have a non-empty intersection with $L_-$. Indeed assume that, say, $L_{T'}[s_+]$ contains only leaves from $L_+$. Because there is no path between two leaves in $L_+$ going through $s$ in $T'$, it must be that actually $L_{T'}[s_+] = L_+$. But that contradicts condition~\eqref{eq:not-1-move}, and proves the claim. Moreover one of $L_{T'}[s_-]$ or $L_{T'}[s_+]$ (or both) is a subset of $L_-$, as otherwise there would be a path between two leaves in $L_+$ going through $s$ and we would have that $s \in T'_{L_+}$, a contradiction. In case (ii), the same holds with the roles of $L_-$ and $L_+$ interchanged. 
	That implies further that $s$ is not muted in the restriction it belongs to. However it is muted in the restriction it does not belong to. Let $r_1 = |R_1|$.
	
	\item \textbf{Appears in both, twice muted}: Let $R_{2,tm}$ be the elements $s$ of $S_{T'}$ such that $s \in T'_{L_-}$ and
	$s \in T'_{L_+}$ and $s$ is muted in both restrictions. That arises precisely when $L_{T'}[s_-]$ and $L_{T'}[s_+]$ each belong
	to a \emph{different} subset among $L_-$ and $L_+$. Let $r_{2,tm} = |R_{2,tm}|$.
	
	\item \textbf{Appears in both, once muted}: Let $R_{2,om}$ be the elements $s$ of $S_{T'}$ such that $s \in T'_{L_-}$ and
	$s \in T'_{L_+}$ and $s$ is muted in exactly one restriction. That arises precisely when one of $L_{T'}[s_-]$ and $L_{T'}[s_+]$ has a non-empty intersection with exactly \emph{one} of $L_-$ and $L_+$,
	and the other has a non-empty intersection with \emph{both} $L_-$ and $L_+$. Let $r_{2,om} = |R_{2,om}|$.
	
	\item \textbf{Appears in both, neither muted}: Let $R_{2,nm}$ be the elements $s$ of $S_{T'}$ such that $s \in T'_{L_-}$ and
	$s \in T'_{L_+}$ and $s$ is muted in neither restriction. That arises precisely when \emph{both} $L_{T'}[s_-]$ and $L_{T'}[s_+]$ have a non-empty intersection with \emph{both} $L_-$ and $L_+$. Let $r_{2,nm} = |R_{2,nm}|$.
	
\end{enumerate}
Because the sets above form a partition of $S_{T'}$ 
and that any hierarchy on $n$ leaves has exactly $n-1$ internal vertices, it follows
that
\begin{equation*}
r_1 + r_{2,tm} + r_{2,om} + r_{2,nm} = n-1.
\end{equation*}
Moreover, on an extended hierarchy with $n' < n$ leaves, the number of internal non-muted vertices is $n'-1$ (which can be seen by collapsing the muted vertices). Hence, counting non-muted 
vertices on each restriction with multiplicity, we get the relation
\begin{equation*}
1\cdot r_1 + 0\cdot r_{2,tm} + 1\cdot r_{2,om} + 2 \cdot r_{2,nm}
= (|L_-| - 1) + (|L_+| - 1) = n-2. 
\end{equation*}
Combining the last two displays gives
\begin{equation}\label{eq:second-step-key}
r_{2,tm} = 1 + r_{2,nm}.
\end{equation}
This last equality is the key to comparing the two sides
of~\eqref{eq:second-step}: the twice muted vertices which 
contribute $\max \ds$ to the LHS are in one-to-one correspondence
with terms on the RHS whose contributions are smaller or equal.

We expand on this last point. 
To simplify the notation, we let
$\ds_- = \ds|_{L_-\times L_-}$
and
$\ds_+ = \ds|_{L_+\times L_+}$.
By the observations above,
we have the following. Recall that $\hat{h} = \hat{h}_m$.
\begin{enumerate}
	\item $R_1$: Each $s \in R_1$ is muted in the restriction it does not belong to but it is not in the restriction it belongs to, so that it contributed to exactly one term on the RHS, say $\Gamma(T'_{L_-};\ds|_{L_-\times L_-})$. In that case, we have shown that both $L_{T'}[s_-]$ and $L_{T'}[s_+]$ have a non-empty intersection with $L_-$. The RHS term $\hat{h}(T'_{L_-},\ds_-|_{L_{T'}[s_-]\times L_{T'}[s_+]})$
	differs from the corresponding LHS term $\hat{h}(T'[s],\ds|_{L_{T'}[s_-]\times L_{T'}[s_+]})$  only in that pairs $(x,y) \in L_{T'}[s_-]\times L_{T'}[s_+]$ with $(x,y) \in L_- \times L_+$ or $L_+\times L_-$ are removed (which we refer to below as being suppressed by restriction) from the minimum defining $\hat{h} = \hat{h}_m$---but such pairs contribute $\Delta = \max \delta$ and therefore do not affect the minimum on the LHS. We have also shown that none of these pairs can be in $L_+ \times L_+$. As a result, we have 
	\begin{eqnarray*}
		&&\sum_{s \in R_1} \hat{h}(T'[s],\ds|_{L_{T'}[s_-]\times L_{T'}[s_+]})\\
		&&= \sum_{s \in R_1\cap S_{T'_{L_-}}} \hat{h}(T'_{L_-},\ds_-|_{L_{T'}[s_-]\times L_{T'}[s_+]})
		+ \sum_{s \in R_1\cap S_{T'_{L_+}}} \hat{h}(T'_{L_+},\ds_+|_{L_{T'}[s_-]\times L_{T'}[s_+]}).
	\end{eqnarray*}
	
	\item $R_{2,tm}$: Each $s \in R_{2,tm}$ contributes to neither term on the RHS, as it is muted in both restriction. On the other hand, we have argued that $L_{T'}[s_-]$ and $L_{T'}[s_+]$ each belong
	to a \emph{different} subset among $L_-$ and $L_+$. Hence we have 
	\begin{eqnarray*}
	&&\sum_{s \in R_{2,tm}} \hat{h}(T'[s],\ds|_{L_{T'}[s_-]\times L_{T'}[s_+]}) = \Delta\cdot r_{2,tm},
	\end{eqnarray*}
	while
	\begin{eqnarray*}
	&&\sum_{s \in R_{2,tm}} \hat{h}(T'_{L_-},\ds_-|_{L_{T'}[s_-]\times L_{T'}[s_+]})
	+ \sum_{s \in R_{2,tm}} \hat{h}(T'_{L_+},\ds_+|_{L_{T'}[s_-]\times L_{T'}[s_+]}) = 0\cdot 2r_{2,tm}.
	\end{eqnarray*}
	
	\item $R_{2,om}$: In this case, we have that
	\begin{eqnarray*}
		&&\sum_{s \in R_{2,om}} \hat{h}(T'[s],\ds|_{L_{T'}[s_-]\times L_{T'}[s_+]})\\
		&&= \sum_{s \in R_{2,om}} \hat{h}(T'_{L_-},\ds_-|_{L_{T'}[s_-]\times L_{T'}[s_+]})
		+ \sum_{s \in R_{2,om}} \hat{h}(T'_{L_+},\ds_+|_{L_{T'}[s_-]\times L_{T'}[s_+]}),
	\end{eqnarray*}
	where we used that (1) each $s$ in $R_{2,om}$ is muted
	in exactly one of the sums on the second line and that (2) the pairs
	of leaves suppressed by restriction in the non-muted terms on the second line correspond to pairs on opposite sides of the root in $T$, which contribute $\Delta = \max \ds$ and therefore do not affect the minimum defining $\hat{h}$.
	
	\item $R_{2,nm}$: Each $s \in R_{2,nm}$ contributes to both terms $ \Gamma(T'_{L_-};\ds|_{L_-\times L_-})$
	and $\Gamma(T'_{L_+};\ds|_{L_+\times L_+})$ on the RHS of~\eqref{eq:second-step}, once with the same value as the corresponding term on the LHS and once with a larger value. Indeed, because both $L_{T'}[s_-]$ and $L_{T'}[s_+]$ have a non-empty intersection with both $L_-$ and $L_+$ and pairs
	$(x,y) \in L_- \times L_+$ or $L_+\times L_-$ 
	have dissimilarity $\Delta$, it follows that the minimum
	\begin{equation}\label{eq:r2nm-min}
	\hat{h}(T'[s],\ds|_{L_{T'}[s_-]\times L_{T'}[s_+]})
	= \min \ds|_{L_{T'}[s_-]\times L_{T'}[s_+]},
	\end{equation}
	is achieved for a pair $(x,y) \in L_- \times L_-$ or $L_+\times L_+$. The claim then follows by noticing that restriction increases the minimum. Let $R_{2,nm}^{-,=}$ be the set of all $s \in R_{2,nm}$ such that the minimum in~\eqref{eq:r2nm-min} is achieved for a pair in $L_- \times L_-$ and $R_{2,nm}^{+,=} = R_{2,nm} \setminus R_{2,nm}^{-,=}$. Then
	\begin{eqnarray*}
		&&\sum_{s \in R_{2,nm}} \hat{h}(T'[s],\ds|_{L_{T'}[s_-]\times L_{T'}[s_+]})\\
		&&= \sum_{s \in R_{2,nm}^{-,=}} \hat{h}(T'_{L_-},\ds_-|_{L_{T'}[s_-]\times L_{T'}[s_+]})
		+ \sum_{s \in R_{2,nm}^{+,=}} \hat{h}(T'_{L_+},\ds_+|_{L_{T'}[s_-]\times L_{T'}[s_+]}),
	\end{eqnarray*}
	while
	\begin{eqnarray*}
		&&\sum_{s \in R_{2,nm}^{+,=}} \hat{h}(T'_{L_-},\ds_-|_{L_{T'}[s_-]\times L_{T'}[s_+]})
		+ \sum_{s \in R_{2,nm}^{-,=}} \hat{h}(T'_{L_+},\ds_+|_{L_{T'}[s_-]\times L_{T'}[s_+]})\\
		&&\leq \Delta \cdot r_{2,nm}.
	\end{eqnarray*}

\end{enumerate}
To sum up, the contributions of $R_1$ and $R_{2,om}$
are the same on both sides of~\eqref{eq:second-step}. The contributions of $R_{2,nm}$ on the LHS are canceled out
by the contributions of $R_{2,nm}^{-,=}$ and $R_{2,nm}^{+,=}$ on the RHS. The remaining terms are:
on the LHS,
$\Delta \cdot r_{2,tm}$;
and on the RHS,
$\leq \Delta\cdot (1 + r_{2,nm})$. Using~\eqref{eq:second-step-key} concludes the proof.
\end{proof}
\noindent That concludes the induction and the proof of the theorem.
\end{proof}






\section*{Acknowledgments}

The author's work was supported by NSF grants DMS-1149312 (CAREER), DMS-1614242, CCF-1740707 (TRIPODS Phase I), DMS-1902892, DMS-1916378, and DMS-2023239 (TRIPODS Phase II), as well as a Simons Fellowship
and a Vilas Associates Award.
Part of this work was done at MSRI and the Simons Institute for the
Theory of Computing.
I thank Sanjoy Dasgupta, Varun Kanade, Harrison Rosenberg, 
Garvesh Raskutti and C\'ecile An\'e for
helpful comments.




\bibliographystyle{alpha}
\bibliography{bibtex}

\end{document}